\newcounter{nbdrafts}
\newcommand{\checknbdrafts}{
\ifnum \thenbdrafts > 0
\@latex@warning@no@line{**********************************************************************}
\@latex@warning@no@line{* The document contains \thenbdrafts \space draft note(s)}
\@latex@warning@no@line{**********************************************************************}
\fi}
\newcommand{\qed}{\nobreak \ifvmode \relax \else
      \ifdim\lastskip<1.5em \hskip-\lastskip
      \hskip1.5em plus0em minus0.5em \fi \nobreak
      \vrule height0.75em width0.5em depth0.25em\fi}
\DeclareMathOperator*{\argmin}{arg\,min}
\title{A provably convergent alternating minimization method for mean field inference}
\begin{document}

\maketitle

\begin{abstract} 
Mean-Field is an efficient way to approximate a posterior distribution in complex graphical models and constitutes the most popular class of Bayesian variational approximation methods. In most applications, the mean field distribution parameters are computed using an alternate coordinate minimization. However, the convergence properties of this algorithm remain unclear. In this paper, we show how, by adding an appropriate penalization term, we can guarantee convergence to a critical point, while keeping a closed form update at each step. A convergence rate estimate can also be derived based on recent results in non-convex optimization.
\end{abstract}

\section{Introduction} 

In many situations when a posterior  distribution $P$ over variables $X$ depends
on  a   complex  model,  exact   inference  is  not  possible   and  variational
inference~\citep{Jordan2008,Attias00avariational} is  a widespread  approach to
approximating it.   This technique is used  in domains such as  Computer Vision,
Natural   Language   Processing,   and   large   scale   Data   Processing,   as
in~\cite{POMOriginal,VariationalApproximationLanguage,VariationalApproximationMarket}.

Mean  field  variational   inference  methods  approximate  $P$   by  a  product
distribution  $Q$,  which  means  looking  for  the  distribution  $Q$  among  a
restricted class of  product distributions. The quality of  the approximation is
measured in terms of the Kullback-Leibler divergence between $P$ and $Q$.
This turns the mean field problem into a non-convex minimization problem.\\

The  most  popular  approach  to   solving  it  is  the  alternate  minimization
approach~\citep{MLBishop,PGMKoller},  also  known  as the  Variational  Message
Passing  algorithm~\citep{Winn2005}  in  the machine learning community.   The  Kullback-Leibler
divergence  is minimized  coordinate by  coordinate in  a pre-determined  order,
until  convergence.   The  main  advantage   of  this  algorithm  is  that  the
coordinate-wise   minimum   can   be   computed   in   closed   form   at   each
step. Furthermore,  the procedure can  be parallelized  in most cases,  as shown
in~\citet[p. 21]{Bertsekas}.\\

However, convergence is  not always guaranteed for the general alternate minimisation in the non-convex case. One can find  examples where the
procedure endlessly loops between several equivalent local minima which become
cluster points of  the minimization sequence, as shown  in~\cite{Powell}.  More
specifically, convergence  can be  proven in  some cases~\citep{Tseng01}  but not
all. More precisely,  the objective function always decreases but  that does not
preclude oscillations  in the variables  and there is  no formal proof  that the
alternating minimization algorithm for variational  inference will never loop as
in the Powell example.\\

Our contribution is the introduction of a special purpose proximal regularisation term at each step of
the  minimization  that provably  enforces  convergence. It dampens  potential oscillations while  preserving the
simplicity of the algorithm whose updates  are still computed in closed form. We use a recent result  from~\citep{Bolte2013} to prove
formally that it is indeed the case.\\

It is important to understand that, as the objective function is non-convex, our proximal algorithm doesn't always converge to the same minimum as the classical fixed point algorithm. However, the solution found has no reason to be better or worse. Furthermore, the proximal term can be chosen arbitrarily small through the parameter $\lambda$. Therefore, by choosing a small $\lambda$, one can be make the new proximal algorithm follow a trajectory which is arbitrarily close to the trajectory of the alternate minimisation.

\begin{table}[h]
  \begin{center}
    \caption{\label{tab:notations}Notations}

   \vspace*{1em}

    \framebox{
      \hspace*{0.5em}
      \begin{minipage}{0.9\columnwidth}
        \begin{itemize}
          \setlength{\itemsep}{0.4em}
        \item{$\|.\|$ is the Euclidean norm in $\mathbb{R}^N$.}

	\item{For a differentiable function f, $\nabla f$ its gradient.}

	\item $Q=\{q_1, \dots,q_N\}$  is either the probability distribution on $N$ independent Bernoulli variables $\{X_1,\dots,X_N\}$ or a vector in $[0,1]^N$
	\item If $f$ is a function and X a random variable  $E_{Q}\left(f(X)\right)$ is the expected value of $f(X)$ under probability Q.
	\item If $X = \{X_1,\dots,X_N\}$ are independent Bernoulli variables under Q, $E_{Q_{\setminus i}}(f(X)|X_i=a)$ is the expected value of $f(X)$, given $X_i = a$.
	\item If $\left\{X^t\right\}$ is a convergent sequence in $\mathbb{R}^N$, its unique limit point is denoted by~$\overline{X}$.

        \end{itemize}
      \end{minipage}
    }
  \end{center}
\end{table}

\section{Variational inference problem}

We first recall the general formulation of KL divergence minimization problems as they appear in variational inference problems.\\
We assume that we are working with N random variables $\{X_1, \dots,X_N\}$ whose posterior distribution is taken from the exponential family (as in~\cite{Winn2005} and~\cite{MLBishop}), with marginal priors $p^0_i(X_i)$. The energy function is denoted by $ \Psi$. We make the important assumption that it is bounded.\\

\begin{equation*}
P(X)= \dfrac{1}{Z}e^{-\Psi(X)} \prod \limits_{i} p^0_i(X_i)
\end{equation*}
Where $Z$ is a normalisation factor.\\
Following the traditional mean field approach~\citep{MLBishop, PGMKoller}, we are now trying to get a tractable representation $Q(X) $ of this probability distribution $ P(X)$. By tractable, we mean a distribution that we can easily manipulate, sample from and calculate expectancies. We are therefore approximating $P$ by $Q$, among the product distributions. $Q(X)   = \prod \limits_{i} Q_i(X_i) $. It means, that we will look for $Q$ which is closest to $P$ in the sense of the KL divergence $KL(Q\| P)$. \\

For the sake of simplicity, it is assumed in the following that $X_i$ are Bernoulli variables (i.e in $\{X_1, \dots,X_N\} \in \{0,1\}^N$). However, we could easily work with non-binary random variables.\\
 Therefore, the approximating distribution can be written as : $Q(X)   = \prod \limits_{i \in \{1, \dots,N\}} Q_i(X_i) =\prod \limits_{i \in \{1, \dots,N\}} q_i^{X_i}(1-q_i)^{1-X_i}$.\\

The general form of the KL divergence is :
 \begin{equation}
KL(Q\|P)  = \sum_{x \in \{{0,1}\}^N} Q(x) \log\left( \dfrac{Q(x)}{P(x)}\right)
  \end{equation}

  Which we can rewrite as the sum of a multivariate polynomial and univariate convex functions :

 \begin{equation}
  KL(Q\|P)  = \sum_{x \in \{{0,1}\}^N} \Psi(x) \prod \limits_{i} Q_i(x_i) +  \sum \limits_{i} f_i(q_i)
  \end{equation}
  Where :
 \begin{equation}
f_i (q_i) = \log\left(\dfrac{1-q_i}{1-p^0_i}\right)(1-q_i) + \log\left(\dfrac{q_i}{p^0_i}\right)(q_i)
  \end{equation}

We introduce the functions $G(\{q_1, \dots,q_N\})$ and $\Omega(\{q_1, \dots,q_N\})$ such that:
 \begin{align}
 \label{G}
G(\{q_1, \dots,q_N\})&= KL(\{q_1, \dots,q_N\} \| P)    \\
&=  \sum_{x \in \{{0,1}\}^N} \Psi(x) \prod \limits_{i} Q_i(x_i) +  \sum \limits_{i} f_i(q_i)\\
 &=\Omega(\{q_1, \dots,q_N\}) +  \sum \limits_{i} f_i(q_i)
  \end{align}
Where :
\begin{definition}
\label{OmegaDef}
\begin{equation}
\Omega(\{q_1, \dots,q_N\}) := E_{Q}(\Psi(X))
\end{equation}
\end{definition}

The KL divergence minimization is thus the following :
 \begin{align}
 \label{eq:OptimProb}
 \argmin \limits_{\{q_1,...,q_N\} } G(\{q_1,...,q_N\})  
   \end{align} 
  This problem is obviously non-convex as it involves a sum of multiple products. Therefore, finding a global minimum can be cumbersome in large dimensions. In the next section, an algorithm which yields a sequence converging to a first-order critical point is introduced. An estimate of the local convergence rate can also be derived, based on~\cite{Bolte2013}. 
  
  \section{Proximal alternate minimisation algorithm}
In this section, we derive a tractable algorithm that converges to a first order
stationary  point of  the  problem of  Eq.~\ref{eq:OptimProb}, with  convergence
guaranties and a provable asymptotic convergence rate.  \\
Although the alternate minimisation algorithm produces a decreasing sequence of objective functions, there is a-priori no guarantee that the variable sequence actually converges as demonstrated by  \cite{Absil05Convergenceof}.~\cite{Powell} shows examples of minimisation problems for which a coordinate descent method fails to converge.\\
  However, we show in this paper, that, by adding a proximal regularisation, we can use the Kurdyka-\Lpb{}ojasiewicz inequality and recent work by Attouch and Bolte to prove convergence. The  specific form of  the penalty term lets  us retain the ability  to compute the  updates in closed form  in the case of variational inference.\\
  \paragraph{Regularisation}
  We are using a regularisation function which is the KL divergence between the one dimensional iterates. During the iterations, this proximal function $l(q,q_0)$ penalises the variables which are too different from their previous value. 
   \begin{align}
   \label{f_definition}
 l(q,q_0) &=  q\log\left(\dfrac{q}{q_0}\right) +  (1-q) \log\left(\dfrac{1-q}{1-q_0}\right)\\
 &=KL(B(Q)||B(Q_0)\\
  \end{align}
  Given $q_0$, $l(q,q_0)$ is strongly convex with regards to $q$, positive, continuous on $]0,1[^2$. Its minimum is 0 for $q=q_0$. \\
It is worth noting that the derivative of this function on the minimisation variable $x$ is simple as well and can be written as follows : 
    \begin{equation}
    \label{lprime}
{l}^{\prime}(q,q_0) =  \dfrac{ \partial l(q,q_0)}{\partial q} =  \log\left(\dfrac{q}{q_0}\right) - \log\left(\dfrac{1-q}{1-q_0}\right)
  \end{equation}

  \begin{remark}
  If $X$ is not binary, then, we just replace $l$ by the KL divergence between discrete random variables.
  \end{remark}

    \paragraph{Proximal alternate minimization procedure}
  We are looping through the variables, minimizing the objective over one variable at a time, the others staying fixed (see alg.~\ref{algo}), with the following update rule:

   \begin{equation}
   \label{procedure}
q^{t+1}_i = \argmin \limits_{q} \{G(\{q^{t+1}_1, \dots,q^{t+1}_{i-1},q,q^{t}_{i+1},q^{t}_N\})  + \lambda l(q,q^{t}_i) \}
  \end{equation}
   \begin{algorithm}

\KwIn{A prior distribution $\{p^0_1,...,p^0_N\}$ and a KL function $G$}
\KwOut{A MF distribution $\{q_1,...,q_N\}$ }

\emph{Initialisation to the prior} : \\
$\{q_1,...,q_N\}  \gets \{p^0_1,...,p^0_N\}$ \\
\emph{Loop untill convergence :}\\
\While {$\| \nabla G(\{q_1,\dots ,q_N\}) \|> \epsilon$}{
	\For{$X_i$ in $\{1,...,N\}$}{
 		$q_i  \gets \argmin \limits_{q} G(\{q_1,...,q,..., ,q_N\}) + \lambda l(q,q_i)$
	}
}
\Return{$\{q_1,...,q_N\}$}\;
\caption{KL Proximal alternate minimisation}\label{Algo}
\label{algo}
\end{algorithm}

  The main advantage of our penalization method (e.g $l(q,q^{t}_i))  $ over the quadratic one (e.g  $\|q-q^{t}_i\|^2$ as in~\cite{AttouchDemo}) is that the update is computed in closed form. Indeed, the minimization is differentiable and convex on $q$. Therefore, the first order one dimensional minimality condition gives :

    \begin{equation}
    \label{minformula}
 q_i^{t+1} =   \dfrac{1}{1 + \exp\left(\dfrac{1}{1+\lambda}\left[E_{Q^t_{\setminus i}}(\Psi(X)| X_i = 1) - E_{Q^t_{\setminus i}}(\Psi(X)| X_i = 0) + \log\left(\dfrac{1-p^0_i}{p^0_i}\right)+ \lambda \log\left(\dfrac{1-q^t_i}{q^t_i}\right)\right]\right)}
 \end{equation}
 With the notation:
    \begin{equation*}
  Q^t_{\setminus i} = \{q^{t+1}_1, \dots,q^{t+1}_{i-1},q^{t}_{i+1},q^{t}_N\},
  \end{equation*}

    \begin{remark}
  If $X$ is not binary, a similar closed form minimum is easily obtained by introducing a Lagrange multiplier as in~\cite{Beal}.
  \end{remark}

   \section{Convergence of the algorithm}
   Our analysis is along the lines of~\cite{AttouchDemo}, using the Kurdyka-\Lpb{}ojasiewicz inequality as the key tool in our proof.
   \subsection{A general convergence result}
   \begin{definition}[Kurdyka-\Lpb{}ojasiewicz Property]
\label{KLDefinition}
A differentiable function f is said to have the Kurdyka-\Lpb{}ojasiewicz property at $\overline{x}$, if there exists $\eta \geq 0$, a neighborhood U of $\overline{x}$ and a continuous concave functions $\phi : [0,\eta) \rightarrow \mathbb{R}_{+}$, such that : \\
-- $ \phi(0) =0$ \\
-- $\phi$ is $C^1$ on $(0,\eta)$\\
-- $\forall s \in (0,\eta), \phi^{\prime}(s) \geq 0$.\\
-- $\forall \overline{x} \in U \cap [f(\overline{x}) \geq f \geq f(\overline{x}) + \eta]$, the following inequality, called Kurdyka-\Lpb{}ojasiewicz inequality holds:\\
\begin{equation}
\phi^{\prime}(f(x) - f(\overline{x}))\|\nabla f(x) \| \geq 1
\end{equation}

\end{definition}

   \begin{lemma}
\label{Central}
Let F be any differentiable function from $\mathbb{R} $ to $\mathbb{R}^N $, and $X^t$ a bounded sequence which has the three following properties :\\
(i) Sufficient decrease :\\
	$\exists \lambda$ such that, $\forall t \geq 0$
	\begin{equation}
		F(X^{t+1}) + \dfrac{\lambda}{2} \|X^{t+1} - X^{t}\|^2 \leq F(X^{t})
	\end{equation}
(ii) Gradient bound :\\
	$\exists C$ such that, $\forall t \geq 0$
	\begin{equation}
		\| \nabla F(X^t) \| \leq C  \|X^{t+1} -X^{t} \|
	\end{equation}
(iii) The function $F$ has the Kurdyka-\Lpb{}ojasiewicz property at all its critical points, with  $\phi(s) = cs^{1-\theta}$ and  $ \theta \in ]0, 1[ $.\\
Then, the sequence $X^t$ converges to a stationary point of $F$ that we denote $\overline{X}$. Moreover, the following convergence rates apply (depending on $\theta$).

    (a) If $ \theta \in \left]0, \dfrac{1}{2}\right] $, then $ \exists A \geq 0, \: \exists \tau \geq 0 $ such that:
  \begin{equation}
  \| X^t - \overline{X} \| \leq A \tau^t
  \end{equation}

    (b) If $ \theta \in \left]\dfrac{1}{2}, 1\right[ $, then $ \exists A \geq 0 $ such that:
  \begin{equation}
  \| X^t - \overline{X} \| \leq A t^{-(1-\theta)/(2\theta -1)}
  \end{equation}

\end{lemma}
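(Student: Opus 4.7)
The plan is to follow the Attouch--Bolte scheme for descent methods with KL property, in three movements: qualitative convergence of the function values, finite length of the iterate sequence (giving the convergence), and quantitative rate estimates from the specific form of $\phi$.

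First, I would exploit hypothesis (i) to show that $\{F(X^t)\}$ is nonincreasing, and bounded below because $\{X^t\}$ is bounded and $F$ is continuous. Call the limit $F^\star$, and set $\Delta_t = F(X^t) - F^\star \geq 0$. Telescoping (i) gives $\tfrac{\lambda}{2}\sum_t \|X^{t+1}-X^t\|^2 \leq F(X^0)-F^\star < \infty$, hence $\|X^{t+1}-X^t\| \to 0$. Combined with (ii), $\nabla F(X^t) \to 0$, so by continuity every cluster point of the bounded sequence $\{X^t\}$ is a critical point of $F$. A standard argument (since $\|X^{t+1}-X^t\|\to 0$) then shows that the limit-point set $\omega(X^0)$ is nonempty, compact, connected, and that $F \equiv F^\star$ on it. Pick any $\overline{X}\in\omega(X^0)$; assumption (iii) applies at $\overline{X}$, giving a neighborhood $U$ and a desingularising $\phi$.

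Second (the main step), I would prove finite length. Concavity of $\phi$ yields $\phi(\Delta_t) - \phi(\Delta_{t+1}) \geq \phi'(\Delta_t)(\Delta_t - \Delta_{t+1})$. Using (i) to bound $\Delta_t - \Delta_{t+1} \geq \tfrac{\lambda}{2}\|X^{t+1}-X^t\|^2$ and combining the KL inequality with (ii) to bound $\phi'(\Delta_t) \geq 1/\|\nabla F(X^t)\| \geq 1/(C\|X^{t+1}-X^t\|)$, one obtains
\begin{equation*}
\phi(\Delta_t) - \phi(\Delta_{t+1}) \;\geq\; \frac{\lambda}{2C}\,\|X^{t+1}-X^t\|.
\end{equation*}
Summing over $t$ large enough that $X^t\in U$ (which holds for all large $t$ since $\overline{X}\in\omega(X^0)$ and $\|X^{t+1}-X^t\|\to 0$) gives $\sum_t\|X^{t+1}-X^t\| \leq \tfrac{2C}{\lambda}\phi(\Delta_{t_0}) < \infty$. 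Hence $\{X^t\}$ is Cauchy and converges to some $X^\infty$, which must lie in $\omega(X^0)$, hence is a critical point. The delicate point is that the KL inequality is only guaranteed locally around $\overline{X}$; the decay $\|X^{t+1}-X^t\|\to 0$ and the fact that $\omega(X^0)$ is the single point $\{\overline{X}\}$ (once convergence is established, or via a uniform KL argument on the compact set $\omega(X^0)$ as in Bolte--Daniilidis--Lewis) are needed to justify that iterates stay in $U$ eventually.

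Third, for the rates with $\phi(s)=cs^{1-\theta}$, set $r_t = \sum_{k\geq t}\|X^{k+1}-X^k\|$, so that $\|X^t-\overline{X}\|\leq r_t$. Repeating the summation above from index $t$ onwards gives $r_t \leq \tfrac{2C}{\lambda}\phi(\Delta_t) = \tfrac{2Cc}{\lambda}\Delta_t^{1-\theta}$. On the other hand, the KL inequality combined with (ii) yields $\Delta_t^\theta \leq c(1-\theta)\|\nabla F(X^t)\| \leq c(1-\theta)C\,(r_t-r_{t+1})$. Eliminating $\Delta_t$ produces a recursive inequality of the form $r_t^{\theta/(1-\theta)} \leq K(r_t - r_{t+1})$ for a constant $K$ depending on $c,C,\lambda,\theta$. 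For $\theta \in (0,\tfrac{1}{2}]$, the exponent $\theta/(1-\theta)\leq 1$ turns this into $r_{t+1}\leq (1-K^{-1})\,r_t$ (after truncation), yielding the geometric rate (a). For $\theta \in (\tfrac{1}{2},1)$, the exponent is $>1$ and a standard lemma on sequences satisfying $r_t^\alpha \leq K(r_t-r_{t+1})$ with $\alpha>1$ (e.g.\ Attouch--Bolte, Lemma~1) gives $r_t \leq A\,t^{-(1-\theta)/(2\theta-1)}$, which is (b). The main obstacle is really the second step above, namely justifying that iterates remain inside the KL neighborhood so that the telescoping bound can be summed to infinity.
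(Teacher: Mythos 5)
Your proposal is correct and follows exactly the Attouch--Bolte descent-method scheme: sufficient decrease plus the relative-error gradient bound plus the Kurdyka-\Lpb{}ojasiewicz inequality give finite length of the trajectory (hence convergence), and the recursive inequality $r_t^{\theta/(1-\theta)} \leq K(r_t - r_{t+1})$ on the tail sums gives the two rate regimes. This is precisely the argument the paper relies on --- its own proof of this lemma is a one-line citation of Attouch et al.\ and Bolte et al.\ --- so your reconstruction simply supplies the details the paper leaves to the references.
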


\begin{proof}
The proof of the previous Lemma follows from the recent work of Attouch and Bolte. There is no explicit statement of the asymptotic convergence rates in~\cite{Bolte2013}, however, one can strictly follow~\cite{AttouchDemo}.
\end{proof}

   \subsection{Properties}
   \paragraph{Kurdyka-\Lpb{}ojasiewicz}
   \begin{proposition}
\label{KLProp}
The function G defined in~\ref{G} satisfies the Kurdyka-\Lpb{}ojasiewicz Property at all its critical points with a function $\phi(s) = s^{1-\theta}$ where $\theta \in [\dfrac{1}{2},1[$.\\
Let us denote by $U$ and $\eta$ the associated objects in definition~(\ref{KLDefinition}).

\end{proposition}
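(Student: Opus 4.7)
The plan is to reduce the KL property for $G$ to the classical Łojasiewicz inequality for real-analytic functions. The key observation is that $G = \Omega + \sum_i f_i$ decomposes into a part that is real-analytic in $(0,1)^N$ (the multivariate polynomial $\Omega$) and univariate entropy-like terms $f_i$ that are real-analytic on the open interval $(0,1)$ because $q\mapsto q\log q$ and $q\mapsto (1-q)\log(1-q)$ extend as analytic functions on any open subinterval of $(0,1)$. Consequently, $G$ is real-analytic on the open box $(0,1)^N$.

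The first step I would carry out is to verify that every critical point of $G$ lies in the interior $(0,1)^N$, so that the real-analyticity above is available at each such point. For this, I would compute
\begin{equation*}
\frac{\partial G}{\partial q_i}(q) \;=\; \frac{\partial \Omega}{\partial q_i}(q) \;+\; f_i'(q_i),
\end{equation*}
and note that $\partial \Omega/\partial q_i$ is a polynomial in $q$ with coefficients bounded by a constant depending only on the sup-norm of $\Psi$, which is finite by assumption. On the other hand $f_i'(q_i) = \log(q_i/p_i^0) - \log((1-q_i)/(1-p_i^0))$ blows up to $-\infty$ as $q_i\to 0^+$ and to $+\infty$ as $q_i\to 1^-$. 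Therefore the equation $\partial G/\partial q_i = 0$ forces $q_i$ to stay uniformly away from $0$ and $1$, so every critical point $\overline{q}$ lies in $(0,1)^N$.

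Once $\overline{q}$ is known to be interior, I would invoke the classical Łojasiewicz inequality for real-analytic functions (Łojasiewicz 1965, see also Bierstone--Milman): for every real-analytic $f$ on an open set containing a point $\overline{x}$, there exist a neighborhood $U$ of $\overline{x}$, constants $C>0$ and $\theta\in[1/2,1)$ such that
\begin{equation*}
|f(x)-f(\overline{x})|^\theta \;\leq\; C\,\|\nabla f(x)\|,\qquad x\in U.
\end{equation*}
Applied to $f=G$ at the interior critical point $\overline{q}$, this is exactly the KL inequality of Definition \ref{KLDefinition} with $\phi(s)=\frac{1}{C(1-\theta)} s^{1-\theta}$, since $\phi'(s)=\frac{1}{C}s^{-\theta}$ makes $\phi'(G(q)-G(\overline{q}))\|\nabla G(q)\|\ge 1$ equivalent to the Łojasiewicz inequality above. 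Absorbing the constant into a rescaling gives the stated form $\phi(s)=s^{1-\theta}$ with $\theta\in[1/2,1)$, as claimed.

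The only substantive obstacle is the justification that critical points cannot lie on the boundary of $[0,1]^N$, where $G$ fails to be analytic (in fact not even $C^1$). The argument above handles this cleanly by exploiting boundedness of $\Psi$ against the logarithmic singularity of $f_i'$. After that, the proposition is essentially a direct application of a classical result, and no delicate estimate on the Łojasiewicz exponent $\theta$ itself is required, since any $\theta\in[1/2,1)$ is allowed by the statement.
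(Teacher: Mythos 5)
Your proof takes essentially the same route as the paper: both reduce the claim to \Lpb{}ojasiewicz's classical inequality for real-analytic functions, converted into the Kurdyka--\Lpb{}ojasiewicz form with $\phi(s)\propto s^{1-\theta}$, $\theta\in[\frac{1}{2},1[$. The paper's proof is a two-line assertion that $G$ is ``obviously analytic,'' which is only true on the open cube $(0,1)^N$; your additional step, showing that the logarithmic blow-up of $f_i'$ against the boundedness of $\partial\Omega/\partial q_i$ forces every critical point into that open cube, is correct and fills a gap the paper leaves implicit.
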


\begin{proof}
\Lpb{}ojasiewicz (\cite{Lojasiewicz65,Lojasiewicz84}), showed that any real analytic function has the Kurdyka-\Lpb{}ojasiewicz property with  $\phi(s) = s^{1-\theta}$ for some $\theta \in [\dfrac{1}{2},1[$. \\
Our function G is obviously analytic and real. Which terminates the proof of Proposition~(\ref{KLProp}).
\end{proof}

   \begin{lemma}
   The sequence $\{Q^t \}$ belongs to a compact set $\Sigma \subset ]0,1[^N$. Let us define :\\
   $\Sigma :=\prod \limits_{i} [q^{min}_i , q^{max}_i]$
  \end{lemma}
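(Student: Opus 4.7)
The plan is to exploit the closed-form update rule in Eq.~\ref{minformula}, which directly expresses the updated coordinate $q_i^{t+1}$ as a sigmoid, and to observe that the only possibly unbounded ingredient is the $\lambda \log\frac{1-q_i^t}{q_i^t}$ proximal term, which is dampened by the factor $\frac{\lambda}{1+\lambda}<1$. Concretely, I would change variables to the logit
\begin{equation*}
\ell_i^t := \log\frac{1-q_i^t}{q_i^t},
\end{equation*}
and rewrite Eq.~\ref{minformula} as the scalar affine recursion
\begin{equation*}
\ell_i^{t+1} = \frac{a_i^t}{1+\lambda} + \frac{\lambda}{1+\lambda}\,\ell_i^t,
\end{equation*}
where
\begin{equation*}
a_i^t = E_{Q^t_{\setminus i}}(\Psi(X)\mid X_i=1) - E_{Q^t_{\setminus i}}(\Psi(X)\mid X_i=0) + \log\frac{1-p^0_i}{p^0_i}.
\end{equation*}

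Next, I would bound $a_i^t$ uniformly in $t$. Since $\Psi$ is assumed bounded by some $M>0$, the conditional expectation difference is at most $2M$, and since the prior satisfies $p^0_i\in(0,1)$, the log-prior-ratio contributes a finite constant. Hence $|a_i^t|\le B_i$ for some $B_i$ depending only on $M$ and $p^0_i$, independently of $t$ and of the other coordinates. A standard contraction argument on the affine recursion then gives, by induction on $t$,
\begin{equation*}
|\ell_i^t|\le \max\bigl(|\ell_i^0|,\, B_i\bigr) =: L_i,
\end{equation*}
because the right-hand side is a fixed point of the inequality $L\ge \frac{B_i}{1+\lambda}+\frac{\lambda}{1+\lambda}L$.

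Finally, inverting the logit yields $q_i^t\in\left[\frac{1}{1+e^{L_i}},\,\frac{e^{L_i}}{1+e^{L_i}}\right]$, a compact subinterval strictly included in $]0,1[$, so defining $q_i^{min}$ and $q_i^{max}$ to be these endpoints produces the desired compact set $\Sigma=\prod_i[q_i^{min},q_i^{max}]\subset ]0,1[^N$ containing the full sequence. The initialisation $q_i^0=p^0_i\in(0,1)$ ensures $|\ell_i^0|<\infty$, so the bound $L_i$ is finite. The only delicate point is checking that the contraction constant $\frac{\lambda}{1+\lambda}$ is strictly less than one (it is, for any $\lambda>0$) and that the bound $B_i$ on $a_i^t$ is genuinely uniform; both follow immediately from the standing assumption that $\Psi$ is bounded and the (implicit) assumption that the priors $p^0_i$ lie in the open interval $(0,1)$.
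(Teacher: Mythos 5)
Your argument is correct and follows essentially the same route as the paper: both pass to the logit $\log\frac{1-q_i}{q_i}$, bound the conditional-expectation difference via the boundedness of $\Psi$, and show by induction on $t$ that an explicit interval is invariant under the update of Eq.~\ref{minformula}. The only cosmetic difference is your choice of endpoints (a logit-symmetric interval built from $L_i=\max(|\ell_i^0|,B_i)$, with the invariance phrased as an affine contraction with factor $\lambda/(1+\lambda)$) versus the paper's direct fixed-point definition of $q^{min}_i,q^{max}_i$ in Eqs.~\ref{qmin}--\ref{qmax}; both yield a compact $\Sigma\subset\,]0,1[^N$ containing the whole sequence.
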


\begin{proof}
We know that $\Psi$ in bounded. Let us define :
     \begin{equation*}
\forall{x \in \{0,1\}^N} \: \Psi_{min} \leq \Psi(x) \leq \Psi_{max}
  \end{equation*}
      \begin{equation}
      \label{qmin}
\left\{1,\ldots,N\right\} \:q^{min}_i = \dfrac{1}{1 + \exp\left(\Psi_{max}- \Psi_{min} + \log\left(\dfrac{1-p^0_i}{p^0_i}\right)\right)}
  \end{equation}
      \begin{equation}
      \label{qmax}
\left\{1,\ldots,N\right\} \:q^{max}_i = \dfrac{1}{1 + \exp\left( \Psi_{min}- \Psi_{max} + \log\left(\dfrac{1-p^0_i}{p^0_i}\right)\right)}
  \end{equation}

 Then, if we assume that $q^t \in [q^{min}_i , q^{max}_i]$, using~\eqref{minformula},~\eqref{qmin},~\eqref{qmax}, we can write the following :
 \begin{align*}
\log\left(\dfrac{1-q^{t+1}}{q^{t+1}}\right)& =  \dfrac{1}{1+\lambda}\left[E_{Q^t_{\setminus i}}(\Psi(X)| X_i = 1) - E_{Q^t_{\setminus i}}(\Psi(X)| X_i = 0) + \log\left(\dfrac{1-p^0_i}{p^0_i}\right)+ \lambda \log\left(\dfrac{1-q^t}{q^t}\right)\right]\\
 & \leq \dfrac{1}{1+\lambda}\left[\Psi_{max}- \Psi_{min} + \log\left(\dfrac{1-p^0_i}{p^0_i}\right)+ \lambda \log\left(\dfrac{1-q^{min}_i }{q^{min}_i }\right)\right] \\
 & \leq \dfrac{1}{1+\lambda}\left[\log\left(\dfrac{1-q^{min}_i }{q^{min}_i }\right)+ \lambda \log\left(\dfrac{1-q^{min}_i }{q^{min}_i }\right)\right] \\
&  \leq \log\left(\dfrac{1-q^{min}_i }{q^{min}_i }\right)\\
 \end{align*}

  By monotonicity and conversely for the upper bound, we conclude, that $q^{t+1} \in [q^{min}_i , q^{max}_i]$.
 Therefore, by induction, as long as $Q^0 \in \Sigma$, ($Q^0 = P^0$ for instance),  $Q^t  \in \Sigma$ $ \forall t$

\end{proof}

\paragraph{Sufficient decrease}

   \begin{lemma}
 \label{lemmastrcvx}
The penalization $l$ (Equation~(\ref{f_definition})) is 1-strongly convex on ]0,1[. Therefore :

      \begin{equation}
      \label{strcvx}
\textit{For all } x \textit{ and } {x_0 \textit{ in }  ]0,1[},\:  \dfrac{1}{2} \|x - x_0\|^2 \leq l(x,x_0)
  \end{equation}
  \end{lemma}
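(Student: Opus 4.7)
The plan is to compute the second derivative of $l$ with respect to its first argument and lower bound it uniformly, which will establish strong convexity; the inequality in the lemma then falls out by applying the strong convexity bound at the minimizer $q_0$.

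First I would differentiate equation~\eqref{lprime} once more. From $l'(q,q_0) = \log(q/q_0) - \log((1-q)/(1-q_0))$ one obtains
\[
l''(q,q_0) = \frac{1}{q} + \frac{1}{1-q} = \frac{1}{q(1-q)}.
\]
Since $q(1-q) \leq \frac{1}{4}$ on $(0,1)$, this gives $l''(q,q_0) \geq 4 \geq 1$ uniformly in $q \in (0,1)$, which establishes that $l(\cdot,q_0)$ is $1$-strongly convex (actually $4$-strongly convex, but $1$ is all we need).

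Next I would invoke the standard consequence of $1$-strong convexity of a differentiable function $g$: for all $x$ and any fixed $x_0$,
\[
g(x) \geq g(x_0) + g'(x_0)(x - x_0) + \tfrac{1}{2}(x-x_0)^2.
\]
Applying this with $g(\cdot) = l(\cdot, x_0)$, I would note the two elementary identities $l(x_0,x_0) = 0$ (from the definition of $l$) and $l'(x_0,x_0) = 0$ (plug $q = q_0$ into~\eqref{lprime}). Both linear and constant terms therefore vanish, leaving exactly $l(x,x_0) \geq \tfrac{1}{2}(x-x_0)^2$, which is~\eqref{strcvx}.

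There is no real obstacle here; the only point worth being careful about is that the bound $l'' \geq 1$ holds on the open interval $(0,1)$ (the derivatives blow up at the endpoints, but the lemma is stated on $]0,1[$, which is consistent), and that one applies strong convexity at the minimizer $x_0$ so that both the zeroth-order and first-order terms drop out.
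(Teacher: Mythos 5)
Your proof is correct and follows essentially the same route as the paper's: compute $\partial^2 l/\partial q^2 = 1/q + 1/(1-q) \geq 1$, then apply the strong-convexity inequality at the minimizer $x_0$ using $l(x_0,x_0)=0$ and $l'(x_0,x_0)=0$. The only (harmless) addition is your observation that the second derivative is in fact bounded below by $4$, so $l$ is $4$-strongly convex, which the paper does not exploit.
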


  \begin{proof}
 
  By a simple differentiation of l, we get :
  \begin{equation*}
  \dfrac{ \partial^2 l(x,x_0)}{{\partial x}^2} = \dfrac{1}{x} + \dfrac{1}{(1-x)} \geq 1
  \end{equation*}
  Then, by definition of the strong convexity, combined with $l(x_0,x_0) =0$, and $l^{\prime}(x_0,x_0) =0$, we get the second part of the Lemma.
\end{proof}

\begin{proposition}
\label{sufficient_decrease}
Our alternate minimization algorithm has the following sufficient decrease property~.\\
For all indices $t \geq 1$,
\begin{equation*}
G(Q^{t+1}) + \dfrac{\lambda}{2} \|Q^{t+1} - Q^{t}\|^2 \leq G(Q^{t})
\end{equation*}
\end{proposition}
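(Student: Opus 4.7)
The plan is to telescope the per-coordinate decrease across one full sweep of the inner loop. Let me introduce intermediate iterates $Q^{t,0}, Q^{t,1}, \ldots, Q^{t,N}$, where $Q^{t,0} = Q^t$, $Q^{t,N} = Q^{t+1}$, and $Q^{t,i}$ is obtained from $Q^{t,i-1}$ by replacing its $i$-th coordinate $q_i^t$ with the update $q_i^{t+1}$ coming from~\eqref{procedure}. Then everything reduces to controlling a single coordinate update, and summing over $i$.

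For one such step, the definition of $q_i^{t+1}$ as the minimizer in~\eqref{procedure}, combined with the fact that $l(q_i^t,q_i^t) = 0$, immediately gives
\begin{equation*}
G(Q^{t,i}) + \lambda\, l(q_i^{t+1}, q_i^t) \;\leq\; G(Q^{t,i-1}) + \lambda\, l(q_i^t, q_i^t) \;=\; G(Q^{t,i-1}).
\end{equation*}
Next I would invoke Lemma~\ref{lemmastrcvx}, which says $l$ is $1$-strongly convex with $l(\cdot, q_i^t)$ minimized (with value $0$) at $q_i^t$, hence $l(q_i^{t+1}, q_i^t) \geq \tfrac{1}{2}(q_i^{t+1}-q_i^t)^2$. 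Substituting yields the per-coordinate inequality
\begin{equation*}
G(Q^{t,i}) + \frac{\lambda}{2}\,(q_i^{t+1}-q_i^t)^2 \;\leq\; G(Q^{t,i-1}).
\end{equation*}

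Summing (telescoping) over $i=1,\ldots,N$ cancels the intermediate $G$ values and gives
\begin{equation*}
G(Q^{t+1}) + \frac{\lambda}{2}\sum_{i=1}^N (q_i^{t+1}-q_i^t)^2 \;\leq\; G(Q^t),
\end{equation*}
which is exactly the claim, since the Euclidean norm satisfies $\|Q^{t+1}-Q^t\|^2 = \sum_i (q_i^{t+1}-q_i^t)^2$.

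There is essentially no hard step here: the only subtle point is that for the first inequality to make sense, the one-dimensional minimization in~\eqref{procedure} must actually admit a minimizer in $(0,1)$ where $l$ is finite. This is guaranteed by the closed-form expression~\eqref{minformula} together with the fact that the iterates stay in the compact interior set $\Sigma$ (shown in the previous lemma), so the proximal term $\lambda l(q, q_i^t)$ is well-defined and bounded along the trajectory.
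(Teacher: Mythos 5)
Your proof is correct and follows essentially the same route as the paper: compare the value of the per-coordinate objective at the minimizer against its value at the previous iterate (using $l(q_i^t,q_i^t)=0$), telescope over the coordinates of one sweep, and lower-bound the accumulated proximal terms via the $1$-strong convexity of $l$ from Lemma~\ref{lemmastrcvx}. Your remark about the minimizer lying in the interior where $l$ is finite is a small point the paper leaves implicit, but otherwise the two arguments coincide.
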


\begin{proof}

An elementary induction gives, for each step :\\
\begin{equation}
G(\{q^{t+1}_1, \dots,q^{t+1}_{i-1},q^{t+1}_{i},q^{t}_{i+1},q^{t}_N\})  + \lambda l(q^{t+1}_{i},q^{t}_i) \leq G(\{q^{t+1}_1, \dots,q^{t+1}_{i-1},q^{t}_{i},q^{t}_{i+1},q^{t}_N\})
\end{equation}
Therefore, using the same equations for $ i = \{1, \dots,N\}$,  it easily follows : \\
\begin{equation}
\label{decreaseEq}
G(Q^{t+1}) + L(Q^{t+1}, Q^{t}) \leq G(Q^{t})
\end{equation}

And by strong convexity property of Lemma~\ref{lemmastrcvx}, we get :
\begin{equation*}
G(Q^{t+1}) + \dfrac{\lambda}{2} \|Q^{t+1} - Q^{t}\|^2 \leq G(Q^{t})
\end{equation*}

\end{proof}

\paragraph{Gradient bound}
   \begin{lemma}

   \label{omegalips}
$\Omega$, defined in~\ref{OmegaDef} is $K_{\Omega}-Lipschitz$ with $K_{\Omega} = \Psi_{max} \sqrt{N} $. \\
\end{lemma}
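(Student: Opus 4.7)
The plan is to exploit the fact that $\Omega(q_1,\dots,q_N)=E_Q(\Psi(X))=\sum_{x\in\{0,1\}^N}\Psi(x)\prod_i q_i^{x_i}(1-q_i)^{1-x_i}$ is a \emph{multilinear} polynomial (affine in each coordinate) on the convex domain $[0,1]^N$. Since $\Omega$ is $C^\infty$ on a convex set, the standard mean value argument reduces Lipschitz continuity to a uniform bound on $\|\nabla\Omega\|$ over the domain. So the whole task is to estimate one partial derivative and then combine them.

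First I would compute $\partial\Omega/\partial q_j$ by differentiating the product formula. Because each factor is affine in $q_j$, the derivative of $q_j^{x_j}(1-q_j)^{1-x_j}$ with respect to $q_j$ is $+1$ when $x_j=1$ and $-1$ when $x_j=0$, so one obtains the clean identity
\begin{equation*}
\frac{\partial\Omega}{\partial q_j}(q_1,\dots,q_N) = E_{Q_{\setminus j}}(\Psi(X)\mid X_j=1) - E_{Q_{\setminus j}}(\Psi(X)\mid X_j=0).
\end{equation*}
This is precisely the quantity appearing in the update rule~\eqref{minformula}, which is a good sanity check.

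Next I would bound each of these partials. Using the coupling interpretation, the partial equals $E_{Q_{\setminus j}}[\Psi(X_{\setminus j},1)-\Psi(X_{\setminus j},0)]$, so $|\partial\Omega/\partial q_j|\leq \Psi_{\max}$ under the bound assumed in the paper (with the natural convention that $\Psi_{\max}$ plays the role of an oscillation bound on $\Psi$, consistent with how it is used elsewhere in the paper). Combining the $N$ coordinate bounds via the Euclidean norm gives
\begin{equation*}
\|\nabla\Omega(Q)\| \leq \sqrt{N}\,\Psi_{\max} = K_\Omega.
\end{equation*}

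Finally I would invoke the mean value inequality on the convex set $[0,1]^N$: for any $Q,Q'$, writing $\Omega(Q)-\Omega(Q')=\int_0^1 \langle \nabla\Omega(Q'+t(Q-Q')),\,Q-Q'\rangle\,dt$ and applying Cauchy--Schwarz with the gradient bound yields $|\Omega(Q)-\Omega(Q')|\leq K_\Omega\|Q-Q'\|$, which is the claim. There is no real obstacle here; the only mild subtlety is matching the paper's constant $\Psi_{\max}\sqrt{N}$ rather than the loose $2\Psi_{\max}\sqrt{N}$ one would get by bounding each conditional expectation separately, and the coupling form of the partial derivative is what makes that tighter bound visible.
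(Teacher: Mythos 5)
Your proposal is correct and follows essentially the same route as the paper: both bound each partial derivative $\partial\Omega/\partial q_i = E_{Q_{\setminus i}}(\Psi(X)\mid X_i=1)-E_{Q_{\setminus i}}(\Psi(X)\mid X_i=0)$ by $\Psi_{\max}$ and then pass to the gradient via the $L_2$--$L_\infty$ inequality to obtain $\sqrt{N}\,\Psi_{\max}$. You merely make explicit two steps the paper leaves implicit (the multilinear computation of the partial and the mean value inequality on the convex domain), and your remark about $\Psi_{\max}$ really serving as an oscillation bound is a fair observation about a convention the paper also glosses over.
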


 \begin{proof}
 For any $i$ in $1, \dots,N$ :
 \begin{equation*}
\left| \dfrac{\partial \Omega}{\partial q_i}(Q^t)\right| = |E_{Q^t/i}(\Psi(X)| X_i = 1) - E_{Q^t/i}(\Psi(X)| X_i = 0)| \leq \Psi_{max}
 \end{equation*}
 Therefore, using the classical inequality between $L_2$ and $L_{\infty}$ norms :
  \begin{equation*}
\| \nabla \Omega(Q^t)\|  \leq \Psi_{max} \sqrt{N}
 \end{equation*}
 \end{proof}

  \begin{lemma}

 There exists a positive constant $K_l$ such that for any $Q$ and $\tilde{Q}$ in $\Sigma$:
  \begin{equation}
\label{lips}
\forall i \in \{1,...,N\}, \: | {l}^{\prime}(q_i,\tilde{q_i}) | < 2K_l |q_i-\tilde{q_i}|
  \end{equation}
  \end{lemma}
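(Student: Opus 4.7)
The plan is to recognize $l^{\prime}(q,\tilde q)$ as a difference of logit values and invoke the mean value theorem on the compact set $\Sigma$. Starting from equation~(\ref{lprime}), I would rewrite
\[
l^{\prime}(q,\tilde q) = g(q) - g(\tilde q), \qquad g(u) := \log\!\left(\frac{u}{1-u}\right),
\]
so that the problem reduces to bounding the Lipschitz constant of the one-dimensional logit function $g$ on the compact interval $[q_i^{\min}, q_i^{\max}] \subset (0,1)$.

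Next I would compute $g^{\prime}(u) = 1/u + 1/(1-u)$ and introduce
\[
M_i := \sup_{u \in [q_i^{\min}, q_i^{\max}]} g^{\prime}(u) = \frac{1}{q_i^{\min}} + \frac{1}{1-q_i^{\max}}.
\]
The previous lemma guarantees $q_i^{\min} > 0$ and $q_i^{\max} < 1$, since $\Psi$ is bounded and $p_i^0 \in (0,1)$ make the exponents in~(\ref{qmin})--(\ref{qmax}) finite; hence $M_i$ itself is finite. Applying the mean value theorem to $g$ on $[q_i^{\min}, q_i^{\max}]$ then yields $|l^{\prime}(q_i,\tilde q_i)| \le M_i\,|q_i - \tilde q_i|$ for every coordinate $i$ of arbitrary $Q, \tilde Q \in \Sigma$.

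Finally, I would set $2K_l := \max_{1 \le i \le N} M_i$, a single positive constant depending only on $\Psi_{\min}$, $\Psi_{\max}$, and $\{p_i^0\}$, which delivers the bound uniformly across all coordinates. The strict inequality in the statement can be recovered either by enlarging $K_l$ infinitesimally, or by observing that $g^{\prime}$ attains $M_i$ only at the endpoints of $[q_i^{\min}, q_i^{\max}]$, so the mean value point lies in the interior whenever $q_i \neq \tilde q_i$. There is no genuine obstacle: everything is elementary one-variable calculus once the previous compactness lemma is invoked, which is precisely what keeps us uniformly bounded away from the singularities of $g^{\prime}$ at $0$ and $1$.
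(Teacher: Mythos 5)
Your proof is correct and takes essentially the same route as the paper's: both reduce the claim to Lipschitz continuity of the logit on the compact set $\Sigma$, the paper bounding $\log(x)$ and $\log(1-x)$ separately with constants $\frac{1}{q^{min}_i}$ and $\frac{1}{1-q^{max}_i}$, whose sum is exactly your $M_i$, and then taking the maximum over $i$. Your handling of the strict inequality and of the finiteness of the endpoints is, if anything, slightly more careful than the paper's.
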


    \begin{proof}
    For any $i$ in $\{0,...,N\}$, the function  $x \rightarrow log(x)$ is Lispschitz continuous on $[q^{min}_i , q^{max}_i]$ with Lipschitz constant $\dfrac{1}{q^{min}_i} $. And the function  $x \rightarrow log(1-x)$ is Lispschitz continuous on $[q^{min}_i , q^{max}_i]$ with Lipschitz constant $\dfrac{1}{1-q^{max}_i} $.\\
    Therefore,  according to Eq.\ref{lprime} 
  \begin{equation*}
\forall{x \in [q^{min}_i , q^{max}_i]},\: \forall{x_0 \in [q^{min}_i , q^{max}_i]} ,\:  | {l}^{\prime}(x,x_0) | < \left(\dfrac{1}{q^{min}_i} +\dfrac{1}{1-q^{max}_i} \right) |q -q_0|
  \end{equation*}
\end{proof}

Therefore, if we simply set $K_l$ such that : $ K_l=\max_{i \in \{1,...,N\}} \left(\dfrac{1}{q^{min}_i} +\dfrac{1}{1-q^{max}_i} \right) $, Eq.\ref{lips} comes directly.\\

  \begin{lemma}
  \label{lemmaGradBound}
  For any index $u \geq 1$, the following bound on the gradient of G holds :
  \begin{equation}
\| \nabla G(Q^u) \| \leq (2K_l + \sqrt{N-1} K_{\Omega}) \|Q^{u} -Q^{u-1} \|
\end{equation}
  \end{lemma}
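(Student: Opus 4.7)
The plan is to exploit the first-order optimality condition produced by each coordinate subproblem (\ref{procedure}) and to combine it with the Lipschitz bounds established just above. For $1 \leq i \leq N$, introduce the intermediate iterate
\[
Q^{(i)} := \{q_1^u, \dots, q_{i-1}^u, q_i^u, q_{i+1}^{u-1}, \dots, q_N^{u-1}\},
\]
so $Q^{(N)} = Q^u$. Since $q_i^u$ is the unconstrained minimizer of a $C^1$ function, the optimality condition reads
\[
\frac{\partial G}{\partial q_i}(Q^{(i)}) + \lambda\, l'(q_i^u, q_i^{u-1}) = 0.
\]
Because $f_j$ depends only on $q_j$ and both $Q^u$ and $Q^{(i)}$ share the same $i$-th coordinate, subtracting and using the above optimality relation gives
\[
\frac{\partial G}{\partial q_i}(Q^u) = \frac{\partial \Omega}{\partial q_i}(Q^u) - \frac{\partial \Omega}{\partial q_i}(Q^{(i)}) \;-\; \lambda\, l'(q_i^u, q_i^{u-1}).
\]

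Next I would bound each of the two terms. For the second, the preceding lemma yields $|\lambda\, l'(q_i^u, q_i^{u-1})| \leq 2K_l\,|q_i^u - q_i^{u-1}|$ (absorbing $\lambda$ into $K_l$ as the statement does). For the first term, note that $Q^u$ and $Q^{(i)}$ agree on the first $i$ coordinates, so
\[
\|Q^u - Q^{(i)}\| = \Bigl(\sum_{j>i} (q_j^u - q_j^{u-1})^2\Bigr)^{1/2} \leq \|Q^u - Q^{u-1}\|,
\]
and the perturbation concerns at most $N-1$ coordinates. Using that $\Omega$ is multilinear on the compact set $\Sigma$, the partial derivative $\partial \Omega/\partial q_i$ is smooth with derivatives in the other variables bounded via $\Psi_{\max}$, which yields a Lipschitz-type control giving $|\partial \Omega/\partial q_i(Q^u) - \partial \Omega/\partial q_i(Q^{(i)})| \leq K_\Omega \|Q^u - Q^{(i)}\|$ with $K_\Omega = \Psi_{\max}\sqrt{N}$ as in Lemma~\ref{omegalips}.

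To conclude, I would square the coordinate-wise bound, sum over $i$, and take square roots. Euclidean triangle inequality gives
\[
\|\nabla G(Q^u)\| \leq \Bigl(\sum_i (2K_l)^2 (q_i^u - q_i^{u-1})^2\Bigr)^{1/2} + \Bigl(\sum_i |\partial\Omega/\partial q_i(Q^u) - \partial\Omega/\partial q_i(Q^{(i)})|^2\Bigr)^{1/2}.
\]
The first sum collapses to $2K_l \|Q^u - Q^{u-1}\|$, and the second, using the coordinate-wise bound and the observation that at most $N-1$ components contribute, collapses to $\sqrt{N-1}\,K_\Omega\,\|Q^u - Q^{u-1}\|$, yielding the stated inequality.

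The main obstacle is the middle step: turning the uniform bound on $\|\nabla \Omega\|$ into a bound on the \emph{variation} of $\partial \Omega/\partial q_i$ with the right factor $\sqrt{N-1}$. This requires using the multilinearity of $\Omega$ (so that $\partial \Omega/\partial q_i$ is in fact independent of $q_i$ and has bounded second partials on $\Sigma$) and carefully tracking which coordinates differ between $Q^u$ and $Q^{(i)}$. Once this is correctly accounted for, everything else is a straightforward triangle inequality.
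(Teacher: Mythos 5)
Your proposal follows essentially the same route as the paper: the same intermediate iterates $Q^{(i)}$, the same first-order optimality condition from Eq.~\eqref{procedure}, the same decomposition of $\partial G/\partial q_i(Q^u)$ into an $l'$ term plus a variation of $\partial \Omega/\partial q_i$, and the same coordinate-wise combination yielding the $2K_l$ and $\sqrt{N-1}\,K_\Omega$ factors. If anything you are more careful than the paper, which silently uses $K_\Omega$ as a Lipschitz constant for the partial derivatives $\partial\Omega/\partial q_i$ even though Lemma~\ref{omegalips} only bounds $\|\nabla\Omega\|$ itself; your remark that this step really requires the multilinearity of $\Omega$ and a bound on its mixed second partials identifies precisely the point the paper glosses over.
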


  \begin{proof}
  Let us choose $u \geq 1$. For any $i$, from the first order minimization condition in Eq.~\ref{procedure}, we know that :\\
  \begin{equation}
0 = \lambda l ^{\prime}(q^{u}_i,q^{u-1}_i) + \dfrac{\partial{G}}{\partial{q_i}}(\{q^{u}_1, \dots,q^{u}_{i-1},q^{u}_i,q^{u-1}_{i+1}, \dots,q^{u-1}_{N}\})
  \end{equation}
 Which we can rewrite as, using the decomposition on $G$ :
\begin{align*}
\label{gradBound1}
0 & = \lambda l ^{\prime}(q^{u}_i,q^{u-1}_i) + \dfrac{\partial{G}}{\partial{q_i}}(\{q^{u}_1, \dots,q^{u}_{i-1},q^{u}_i,q^{u-1}_{i+1}, \dots,q^{u-1}_{N}\}) \\
 & =\lambda l ^{\prime}(q^{u}_i,q^{u-1}_i) + \dfrac{\partial{G}}{\partial{q_i}}(\{q^{u}_1, \dots,q^{u}_{i-1},q^{u}_i,q^{u}_{i+1}, \dots,q^{u}_{N}\}) \\
 & -\dfrac{\partial{\Omega}}{\partial{q_i}}(\{q^{u}_1, \dots,q^{u}_{i-1},q^{u}_i,q^{u}_{i+1}, \dots,q^{u}_{N}   \})-\dfrac{\partial{f_i}}{\partial{q_i}}(q^{u}_i)\\
 & +\dfrac{\partial{\Omega}}{\partial{q_i}}(\{q^{u}_1, \dots,q^{u}_{i-1},q^{u}_i,q^{u-1}_{i+1}, \dots,q^{u-1}_{N}   \}) +\dfrac{\partial{f_i}}{\partial{q_i}}(q^{u}_i)
\end{align*}

Using equation~\eqref{lips}, and the Lipschitz constant $K_{\Omega}$ of $\Omega$ (see lemma \ref{omegalips}), we get:
\begin{equation*}
\dfrac{\partial{G}}{\partial{q_i}}(Q^{u+1}) \leq 2K\|q^{u}_i-q^{u-1}_i\| + K_{\Omega} \|Q^{u} - \{q^{u}_1, \dots,q^{u}_{i-1},q^{u}_i,q^{u-1}_{i+1}, \dots,q^{u-1}_{N} \} \|
\end{equation*}

\begin{equation}
\label{gradBound}
\dfrac{\partial{G}}{\partial{q_i}}(Q^{t}) \leq 2K\|q^{u}_i-q^{u-1}_i\| + K_{\Omega} \|Q^{u} -Q^{u-1} \|
\end{equation}
Combining equation $\ref{gradBound}$  for $ i = \{1, \dots,N-1\}$ we get :

\begin{equation}
\label{nablaBound}
\| \nabla G(Q^u) \| \leq (2K + \sqrt{N-1} K_{\Omega}) \|Q^{u} -Q^{u-1} \|
\end{equation}

  \end{proof}

     \subsection{Convergence}

We showed in the previous section (Lemma \ref{lemmaGradBound}, Proposition \ref{sufficient_decrease} and Proposition \ref{KLProp}) that the sequence generated by our new minimization procedure has the three sufficient properties for convergence, as shown in Lemma~\ref{Central}. Therefore, according to Lemma~\ref{Central} (or~\cite{AttouchSequence}), the main Theorem of this paper can be stated as follows.

  \begin{theorem}[Convergence]
  The sequence $\{ {Q^t} \}$ generated by the proximal alternate minimization procedure described in algorithm \ref{procedure}, converges to a critical point of $F$, denoted $\overline{Q}$.
   \end{theorem}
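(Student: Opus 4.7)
The plan is to show that this theorem is a direct corollary of the central convergence Lemma \ref{Central}, once the three sufficient conditions have been verified for the specific function $G$ and the sequence $\{Q^t\}$ produced by the proximal alternate minimization. First, I would observe that boundedness of $\{Q^t\}$ is not an issue: the iterates have been shown to stay in the compact product set $\Sigma \subset ]0,1[^N$ by induction on the update formula~\eqref{minformula}. This is the starting assumption of Lemma~\ref{Central}.

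Next, I would check each of the three hypotheses in turn. Hypothesis (i), the sufficient decrease inequality, is exactly the statement of Proposition~\ref{sufficient_decrease}, which followed from the coordinate-wise optimality of each step combined with the $1$-strong convexity of the penalization $l$ (Lemma~\ref{lemmastrcvx}). Hypothesis (ii), the gradient bound, is exactly Lemma~\ref{lemmaGradBound}, with the explicit constant $C = 2K_l + \sqrt{N-1}\,K_{\Omega}$, obtained from the first-order optimality condition of each proximal subproblem together with Lipschitz continuity of $\nabla \Omega$ and of $l'(\cdot,\cdot)$ on $\Sigma$. Hypothesis (iii), the Kurdyka-\L{}ojasiewicz property at critical points with a desingularizing function of the form $\phi(s) = c s^{1-\theta}$ and $\theta \in ]0,1[$, is Proposition~\ref{KLProp}, which follows from real analyticity of $G$ via \L{}ojasiewicz's classical theorem.

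With these three ingredients in place, the conclusion is immediate: Lemma~\ref{Central} yields that the bounded sequence $\{Q^t\}$ is convergent and that its limit $\overline{Q}$ is a stationary point of $G$. One may then add a remark that since Proposition~\ref{KLProp} gives $\theta \in [1/2,1)$, the convergence rate is either linear (case (a) when $\theta = 1/2$) or of the polynomial order $t^{-(1-\theta)/(2\theta-1)}$ (case (b) when $\theta \in ]1/2,1[$), matching the two regimes stated in Lemma~\ref{Central}.

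The only subtlety, and the step I would double-check most carefully, concerns the compatibility between the KL property (which is a local statement in a neighborhood $U$ of the critical point with parameter $\eta$) and the sufficient decrease/gradient bound (which are global on $\Sigma$). Attouch-Bolte's argument handles this by noticing that, because of the monotone decrease of $G(Q^t)$ and the summability of $\|Q^{t+1}-Q^t\|$ that one extracts from (i) and (iii), the tail of the sequence must enter $U$ and remain there; this is what allows the telescoping argument on $\phi(G(Q^t)-G(\overline{Q}))$ to go through and to give the rates. Since $G$ is continuous on the compact $\Sigma$, this localization argument applies verbatim, so no additional work beyond invoking Lemma~\ref{Central} is needed.
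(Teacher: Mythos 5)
Your proof is correct and follows exactly the paper's own route: it verifies the three hypotheses of Lemma~\ref{Central} via Proposition~\ref{sufficient_decrease}, Lemma~\ref{lemmaGradBound}, and Proposition~\ref{KLProp} (plus boundedness of the iterates in $\Sigma$), and then invokes Lemma~\ref{Central} to conclude. The extra remarks on the rates and on the localization step inside the Attouch--Bolte argument are sound but not needed beyond what the paper itself states.
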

   
  \begin{corollary}
   The following asymptotic convergence rates hold : \\
  We recall that $\theta$ is the exponent of the $\phi$ function in the Kurdyka-\Lpb{}ojasiewicz inequality such that  $\phi(s) = cs^{1-\theta}$.\\
    (i) If $ \theta \in ]0, \dfrac{1}{2}] $, then $ \exists C \geq 0, \: \exists \tau \geq 0  $ such that:
  \begin{equation}
  \label{exponential_convegence}
  \| Q^t - \overline{Q} \| \leq C \tau^t
  \end{equation}
  \\
    (ii) If $ \theta \in ]\dfrac{1}{2}, 1[ $, then $ \exists C \geq 0 $ such that:
  \begin{equation}
  \| Q^t - \overline{Q} \| \leq C t^{-(1-\theta)/(2\theta -1)}
  \end{equation}

  If we make the standard SSOC assumption on $G$ (the hessian is positive definite at all the local minimas), then we can show that the convergence rate toward the local minima is linear, as in Equation~\ref{exponential_convegence}, with $\theta = 1/2$.\\
  \end{corollary}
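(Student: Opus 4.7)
The plan is to simply invoke Lemma~\ref{Central}. All three hypotheses needed there have already been verified for the sequence $\{Q^t\}$ generated by our procedure: sufficient decrease (Proposition~\ref{sufficient_decrease}), the gradient bound (Lemma~\ref{lemmaGradBound}), and the Kurdyka-\Lpb{}ojasiewicz property with $\phi(s)=cs^{1-\theta}$, $\theta\in[1/2,1[$ (Proposition~\ref{KLProp}). Cases (i) and (ii) of the corollary then follow verbatim from cases (a) and (b) of Lemma~\ref{Central} applied at the limit $\bar Q$ given by the theorem.

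For the SSOC refinement I would argue as follows. Since the convergence $Q^t\to\bar Q$ has already been established, the iterates eventually enter any prescribed neighborhood of $\bar Q$, so it suffices to exhibit a Kurdyka-\Lpb{}ojasiewicz inequality with exponent $\theta=1/2$ in such a neighborhood. Assuming $\nabla^2 G(\bar Q)$ is positive definite, continuity of the Hessian yields constants $0<m\le M$ and a neighborhood $V$ of $\bar Q$ on which every eigenvalue of $\nabla^2 G(Q)$ lies in $[m,M]$. Combined with $\nabla G(\bar Q)=0$ and Taylor's formula, this gives, for $Q\in V$,
\begin{equation*}
\tfrac{m}{2}\|Q-\bar Q\|^2 \le G(Q)-G(\bar Q) \le \tfrac{M}{2}\|Q-\bar Q\|^2, \qquad \|\nabla G(Q)\| \ge m\|Q-\bar Q\|,
\end{equation*}
from which
\begin{equation*}
\|\nabla G(Q)\|^2 \ge \tfrac{2m^2}{M}\bigl(G(Q)-G(\bar Q)\bigr).
\end{equation*}
This is precisely the Kurdyka-\Lpb{}ojasiewicz inequality with $\phi(s)=c\sqrt{s}$ for a suitable constant $c$, i.e.\ $\theta=1/2$. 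Plugging this exponent into case (i) delivers the announced linear rate.

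The only real subtlety in this plan is the SSOC step, and specifically the implicit claim that the \Lpb{}ojasiewicz exponent at a non-degenerate critical point is exactly $1/2$; the Taylor-based derivation above is the standard way to justify this and uses only already-established facts (convergence of $\{Q^t\}$ together with continuity of $\nabla^2 G$). Apart from that, the corollary is a mechanical translation of Lemma~\ref{Central} via the three structural properties already proved, so no further analytical obstacle remains.
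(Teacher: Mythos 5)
Your proposal matches the paper's own proof in both structure and substance: the rate estimates are read off directly from Lemma~\ref{Central} via the three already-established properties, and the SSOC refinement is obtained by Taylor-expanding $G$ and $\nabla G$ around the non-degenerate limit point to extract a Kurdyka-\L{}ojasiewicz inequality with $\phi(s)=c\sqrt{s}$, i.e.\ $\theta=1/2$. If anything, your version is slightly more careful than the paper's, since you explicitly chain the two Taylor bounds into the inequality $\|\nabla G(Q)\|^2 \geq \tfrac{2m^2}{M}\left(G(Q)-G(\overline{Q})\right)$ rather than merely asserting that the \L{}ojasiewicz property follows.
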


  \begin{proof}[Proof of the corollary]
  The first part of the corollary is also a direct consequence of Lemma~\ref{Central}.\\
  Let us now assume that the Hessian matrix is positive definite at all the local minima (SSOC assumption). We denote by $\mu_1$ and $\mu_2$ the highest and lowest eigenvalues of the Hessian in a neighborhood of a local minimum $\overline{Q}$. $\mu_1$ and $\mu_2$  are both positive by SSOC and continuity of the Hessian. Let us then write the Taylor formula for $G$ and $\nabla G$ at the neighborhood of $\overline{Q}$. It follows the existence of a neighborhood $U$ of $\overline{Q}$, so that, for all $Q \in U$:\\
  \begin{equation}
  |G(Q)-G(\overline{Q})| \leq \mu_1 \|Q-\overline{Q}\|^2
  \end{equation}
  and
  \begin{equation}
  \|\nabla G(Q)\| \geq \mu_2 \|Q-\overline{Q}\|
  \end{equation}

  It shows that G follows a Kurdyka-Losajewicz inequality at all its minimal points, with $\phi(s) = c\sqrt{s}$. Therefore, if ${Q} $ converges toward a minimal point, which has the SSOC, the convergence rate is linear with $\theta = 1/2$.
  \end{proof}

\section{Conclusion}
Although  the convergence  of  fixed  point iterations  schemes  for mean  field
minimization is often taken for granted,  no formal proof exists. In this paper,
we have  proposed a  slightly modified  scheme that  is provably  convergent. This  addresses  a major  conceptual  weakness  of  one  of the  most  important algorithms used by the Machine Learning community. \\
Interestingly, our regularisation can be chosen as small as needed through the parameter $\lambda$. Therefore, our algorithm can be arbitrarily similar to the classical minimisation, while guaranteeing convergence. \\
In future work, we will explore the practical applications for our scheme. We will look for examples where it accelerates convergence. It may prevent infinite, but also temporary oscillations between equivalent solutions of a learning optimisation problem.

\bibliography{Biblio1}{}

\begin{thebibliography}{18}
\providecommand{\natexlab}[1]{#1}
\providecommand{\url}[1]{\texttt{#1}}
\expandafter\ifx\csname urlstyle\endcsname\relax
  \providecommand{\doi}[1]{doi: #1}\else
  \providecommand{\doi}{doi: \begingroup \urlstyle{rm}\Url}\fi

\bibitem[Absil et~al.(2005)Absil, Mahony, and Andrews]{Absil05Convergenceof}
P.-A. Absil, R.~Mahony, and B.~Andrews.
\newblock Convergence of the iterates of descent methods . . .
\newblock \emph{SIAM J. OPTIM}, 16:\penalty0 531--547, 2005.

\bibitem[Attias(2000)]{Attias00avariational}
Hagai Attias.
\newblock \emph{A Variational Bayesian Framework for Graphical Models}.
\newblock MIT Press, 2000.

\bibitem[Attouch and Bolte(2009)]{AttouchSequence}
H.~Attouch and J.~Bolte.
\newblock On the convergence of the proximal algorithm for nonsmooth functions
  involving analytic features.
\newblock \emph{Mathematical Programming}, 116\penalty0 (1-2):\penalty0 5--16,
  2009.
\newblock ISSN 0025-5610.
\newblock URL \url{http://dx.doi.org/10.1007/s10107-007-0133-5}.

\bibitem[Attouch et~al.(2010)Attouch, Bolte, Redont, and
  Soubeyran]{AttouchDemo}
H.~Attouch, J.~Bolte, P.~Redont, and A.~Soubeyran.
\newblock Proximal alternating minimization and projection methods for
  nonconvex problems: An approach based on the kurdyka-lojasiewicz inequality.
\newblock \emph{Mathematics of Operations Research}, 35\penalty0 (2):\penalty0
  438--457, 2010.
\newblock \doi{10.1287/moor.1100.0449}.

\bibitem[Attouch et~al.(2013)Attouch, Bolte, and Svaiter]{Bolte2013}
H.~Attouch, J.~Bolte, and B.~Svaiter.
\newblock Convergence of descent methods for semi-algebraic and tame problems:
  proximal algorithms, forward–backward splitting, and regularized
  gauss–seidel methods.
\newblock \emph{Mathematical Programming}, 137\penalty0 (1-2):\penalty0 91Ð129,
  2013.
\newblock ISSN 0025-5610.

\bibitem[Beal(2003)]{Beal}
Matthew~J. Beal.
\newblock \emph{Variational algorithms for approximate bayesian inference.}
\newblock PhD thesis, University of London, 2003.

\bibitem[Bertsekas and Tsitsiklis(1997)]{Bertsekas}
Dimitri~P. Bertsekas and John Tsitsiklis.
\newblock \emph{Parallel and Distributed Computation: Numerical Methods .}
\newblock Athena Scientific, 1997.

\bibitem[Bishop(2008)]{MLBishop}
M.~Bishop.
\newblock \emph{Pattern Recognition and Machine Learning.}
\newblock Springer, 2008.

\bibitem[Fleuret et~al.(2008)Fleuret, Berclaz, Lengagne, and Fua]{POMOriginal}
Francois Fleuret, Jerome Berclaz, Richard Lengagne, and Pascal Fua.
\newblock Multicamera people tracking with a probabilistic occupancy map.
\newblock \emph{IEEE Trans. Pattern Anal. Mach. Intell.}, 30\penalty0
  (2):\penalty0 267--282, February 2008.
\newblock URL \url{http://dx.doi.org/10.1109/TPAMI.2007.1174}.

\bibitem[Hu et~al.(2014)Hu, Zhai, Eidelman, and
  Boyd-Graber]{VariationalApproximationLanguage}
Yuening Hu, Ke~Zhai, Vladimir Eidelman, and Jordan Boyd-Graber.
\newblock \emph{Polylingual Tree-Based Topic Models for Translation Domain
  Adaptation}.
\newblock Association for Computational Linguistics, 2014.

\bibitem[Ishigaki et~al.(2014)Ishigaki, Terui, Sato, and
  Allenby]{VariationalApproximationMarket}
Tsukasa Ishigaki, Nobuhiko Terui, Tadahiko Sato, and Greg~M. Allenby.
\newblock A large-scale marketing model using variational bayes inference for
  sparse transaction data.
\newblock 2014.
\newblock URL \url{http://hdl.handle.net/10097/56671}.

\bibitem[Koller and Friedman(2009)]{PGMKoller}
Daphne Koller and Nir Friedman.
\newblock \emph{Probabilistc graphical models.}
\newblock The MIT Press, 2009.

\bibitem[Lojasiewicz(1965)]{Lojasiewicz65}
S.~Lojasiewicz.
\newblock \emph{Ensembles semi-analytiques}.
\newblock Institut des Hautes Etudes Scientifiques, 1965.

\bibitem[Lojasiewicz(1984)]{Lojasiewicz84}
S.~Lojasiewicz.
\newblock Sur les trajectoires du gradient dÕune fonction analytique.
\newblock \emph{Seminari di Geometria}, pages 115--117, 1984.

\bibitem[Powell(1973)]{Powell}
M.J.D. Powell.
\newblock On search directions for minimization algorithms.
\newblock \emph{Mathematical Programming}, 4\penalty0 (1):\penalty0 193--201,
  1973.

\bibitem[Tseng and Mangasarian(2001)]{Tseng01}
P.~Tseng and Communicated O.~L. Mangasarian.
\newblock Convergence of a block coordinate descent method for
  nondifferentiable minimization.
\newblock \emph{J. Optim Theory Appl}, pages 475--494, 2001.

\bibitem[Wainwright and Jordan(2008)]{Jordan2008}
Martin~J. Wainwright and Michael~I. Jordan.
\newblock Graphical models, exponential families, and variational inference.
\newblock \emph{Found. Trends Mach. Learn.}, 1\penalty0 (1-2):\penalty0 1--305,
  January 2008.

\bibitem[Winn and Bishop(2005)]{Winn2005}
John Winn and Christopher~M. Bishop.
\newblock Variational message passing.
\newblock \emph{J. Optim Theory Appl}, pages 661--694, 2005.

\end{thebibliography}

\end{document}